\newtheorem{definition}{Definition}
\newtheorem{theorem}{Theorem}
\newtheorem{lemma}{Lemma}
\newcommand{\sys}[1]{\ensuremath{\mathbf{#1}}}
\newcommand{\ra}{\ensuremath{\rightarrow}}
\newcommand{\Ra}{\ensuremath{\Rightarrow}}
\newcommand{\nc}{\mathop{\mid\!\sim}}
\newcommand{\Dc}{\ensuremath{\mathcal{D}}}
\newcommand{\Fc}{\ensuremath{\mathcal{F}}}
\newcommand{\Ec}{\ensuremath{\mathcal{E}}}
\newcommand{\Arg}{\ensuremath{{\tt Arg}}}
\newcommand{\Con}{\ensuremath{{\tt Conc}}}
\newcommand{\Sub}{\ensuremath{{\tt Sub}}}
\newcommand{\HSub}{\ensuremath{{\tt HSub}}}
\newcommand{\HArg}{\ensuremath{{\tt HArg}}}
\newtheorem{remark}{Remark}
\newtheorem{example}{Example}
\newcounter{Gcount}
\begin{document}
\sloppy
\setlist[enumerate]{itemsep=1pt,leftmargin=*,topsep=1pt}
\setlist[itemize]{itemsep=1pt,leftmargin=*,topsep=1pt}



\title{Reasoning by Cases in Structured Argumentation.}
\author{Mathieu Beirlaen, Jesse Heyninck and Christian Stra{\ss}er\\ \small Ruhr University Bochum \\ \small mathieubeirlaen@gmail.com, jesse.heyninck@gmail.com, \small christian.strasser@rub.de\thanks{Research for this article was sponsored by a Sofja Kova-\break levskaja award of the Alexander von Humboldt Foundation, funded by the German Ministry for Education and Research. We are indebted to Leon van der Torre and Emil Weydert for helpful comments and suggestions.}}
\maketitle

\begin{abstract}
We extend the $ASPIC^+$ framework for structured argumentation so as to allow applications of the reasoning by cases inference scheme for defeasible arguments. Given an argument with conclusion `$A$ or $B$', an argument based on $A$ with conclusion $C$, and an argument based on $B$ with conclusion $C$, we allow the construction of an argument with conclusion $C$. We show how our framework leads to different results than other approaches in non-monotonic logic for dealing with disjunctive information, such as disjunctive default theory or approaches based on the OR-rule (which allows to derive a defeasible rule `If ($A$ or $B$) then $C$', given two defeasible rules `If $A$ then $C$' and `If $B$ then $C$'). We raise new questions regarding the subtleties of reasoning defeasibly with disjunctive information, and show that its formalization is more intricate than one would presume.
\end{abstract}

\section{Introduction}\label{sec_intro}

When formulated in terms of the material implication connective `$\supset$', the pattern of \emph{reasoning by cases} is valid in classical logic:
\begin{equation}
\varphi_1\vee\ldots\vee\varphi_n, \varphi_1\supset\psi,\ldots,\varphi_n\supset\psi \vdash \psi \tag{RBC$_{\supset}$}
\end{equation}
Many formalisms of non-monotonic logic likewise allow \emph{defeasible} applications of reasoning by cases, where a formula `$\varphi\Rightarrow\psi$' reads `If $\varphi$, then normally/usually/probably $\psi$':
\begin{equation}
\varphi_1\vee\ldots\vee\varphi_n, \varphi_1\Ra\psi,\ldots,\varphi_n\Ra\psi \nc \psi \tag{RBC$_{\Ra}$}
\end{equation}
In the context of formal argumentation, it is natural to include a more general argumentative version of the RBC rule. Relative to a knowledge base this rule would allow the construction of an \emph{rbc-
argument} $C$ with conclusion $\psi$ given 
(i) an argument $A$ with the disjunctive conclusion $\varphi_1\vee\ldots\vee\varphi_n$, and 
(ii) for each $i\in\{1,\ldots,n\}$, an argument $B_i$ with conclusion $\psi$ based on an extended knowledge base including $\varphi_i$.  
To the best of our knowledge, no such rule has yet been introduced and studied in the context of formal argumentation. The aim of this paper is to do exactly this, and to investigate the nature of `argumentation by cases'.

For defining rbc-arguments and attacks on rbc-arguments, we will extend the $ASPIC^+$ framework \cite{ModPra13}. $ASPIC^+$ is a framework for instantiating abstract argumentation frameworks as conceptualized by Dung \cite{Dun95}. We introduce abstract argumentation frameworks in Section \ref{sec_abstractarg}, and define our formalism in Section \ref{sec_aspichyp}. Our approach is limited in at least two ways. First, we do not allow for nested or iterated rbc-arguments. Second, we do not yet take into account priorities assigned to arguments, nor do we include undercutting attacks. The removal of these limitations is left for future work.

The present approach raises new questions regarding the nature of arguing by cases. For instance, what happens if in the rbc-argument $C$ one of the arguments $B_i$ is rebutted by an independent argument? Is this a sufficient condition for rejecting argument $C$? After all, there are $n-1$ other defeasible `paths' leading to $C$'s conclusion. Our formalism implements a cautious rationale according to which a successful rebut on one of its paths is indeed sufficient for rejecting an rbc-argument. 
In Section \ref{sec_related} we show how this approach leads to intuitive outcomes different from those obtained by other formalisms in non-monotonic logic. 


\section{Abstract argumentation}\label{sec_abstractarg}

A Dung-style \emph{abstract argumentation framework} (AF) is a pair $(\Arg,\tt{Att})$ where $\Arg$ is a set of arguments and $\tt{Att} \subseteq \Arg\times\Arg$ is a binary relation of attack. Relative to an AF, Dung defines a number of extensions -- subsets of $\Arg$ -- on the basis of which we can evaluate the arguments in $\Arg$. 
\begin{definition}[Defense]
A set of arguments $\mathcal{X}$ defends an argument $A$ iff every attacker of $A$ is attacked by some $B \in\mathcal{X}$.
\end{definition}
\begin{definition}[Extensions]\label{def_af}
  Let $(\Arg, {\tt Att})$ be an AF.
If $\mathcal{E} \subseteq \Arg$ is conflict-free, i.e.\ there are no $A,B\in \mathcal{E}$ for which $(A,B)\in {\tt Att}$, then (i) $\mathcal{E}$ is a \emph{complete extension} iff $A \in \mathcal{E}$ whenever $\mathcal{E}$ defends $A$; (ii) $\mathcal{E}$ is a \emph{preferred extension} iff it is a set inclusion maximal complete extension;
and (iii) $\mathcal{E}$ is the \emph{grounded extension} iff it is the set inclusion minimal complete extension.
\end{definition}
\noindent Dung \cite{Dun95} showed that for every AF there is a \emph{unique} grounded extension. 
On Dung's abstract approach from \cite{Dun95}, arguments are basic units of analysis the internal structure of which is not represented. In what follows we will \emph{instantiate} abstract arguments by allowing for the representation of their internal logical structure. 


\section{Argumentation by cases}\label{sec_aspichyp}

In this section we define structured argumentation frameworks (SAFs) for reasoning by cases. Our point of departure is an instantiation of the $ASPIC^+$ framework (without priorities, without defeasible premises and without undercuts).


We adjust $ASPIC^+$ in the following ways. (i) We define a new type of argument called an \emph{rbc-argument} (see point 4 in Definition \ref{def_arg}). By means of rbc-arguments, we can argue by cases in an argumentation formalism (Section \ref{sub_argdefs}).
(ii) We generalize the attack relation so as to include argumentation by cases, using the concept of a \emph{hypothetical argument} (Section \ref{sub_attacks}), and we define a logical consequence relation for the resulting SAFs (Section \ref{sub_conseq}). We briefly discuss the meta-theoretical properties of our framework (Section \ref{sub_ratpostulates}).

\subsection{Arguments}\label{sub_argdefs}
\def\AT{{\rm AT}}

We illustrate our framework using the propositional fragment of classical logic (\sys{CL}) as our core logic. We denote the consequence relation of \sys{CL} by $\vdash$.
To obtain the formal language $\mathcal{L}$ of \sys{CL}, we close a denumerable stock $\mathcal{P}=\{p,q,r,\ldots\}$ of propositional letters under the usual \sys{CL}-connectives $\neg, \vee, \wedge, \supset, \equiv$. We also add the verum constant $\top$ and the falsum constant $\bot$ to $\mathcal{L}$. For reasons of transparency we will sometimes use subscripted letters $p_1,p_2,q_1,q_2,\ldots$ as names for propositional letters.

\begin{definition}[Argumentation theory]\label{def_AT} An argumentation theory (AT) is a triple 
$\AT = (\mathcal{L},\mathcal{R},\mathcal{F})$ where:
\begin{itemize}[itemsep=1pt]
\item $\mathcal{L}$ is our formal language defined above;
\item $\mathcal{R}=\mathcal{S}\cup\mathcal{D}$ is a set of strict ($\mathcal{S}$) and defeasible ($\mathcal{D}$) inference rules of the form $\varphi_1,\ldots,\varphi_n \ra \psi$ and $\varphi_1,\ldots,\varphi_n \Ra \psi$ respectively (where $\varphi_1,\ldots,\varphi_n,\psi\in\mathcal{L}$); and
\item $\Fc\subseteq\mathcal{L}$ is a \sys{CL}-consistent knowledge base.\footnote{$\Fc\subseteq\mathcal{L}$ is \emph{\sys{CL}-consistent} iff $\Fc\not\vdash\bot$.}
\end{itemize}
\end{definition}
We assume in addition that $\varphi_1,\ldots,\varphi_n\ra \psi \in \mathcal{S}$ iff $\{\varphi_1,\ldots,\varphi_n\}\vdash \psi$. 
Since we keep $\mathcal{L}$ and $\mathcal{S}$ fixed, we will in the remainder refer to ATs as pairs $(\mathcal{D,F})$.

\begin{definition}[Arguments]\label{def_arg}
  Given an argumentation theory $\AT = (\Dc,\Fc)$, the set of arguments $\Arg^{\bot}(\AT)$ contains:
\begin{enumerate}[itemsep=1pt]
\item \(A = \langle \phi \rangle\) where \(\phi \in \Fc\)
\begin{itemize}[itemsep=1pt,topsep=1pt]
\item \(\Con(A) = \phi\)
\item \(\Sub(A) = \lbrace A \rbrace\)
\item \(\HSub(A) = \emptyset\)
\end{itemize}
\item \(A = \langle A_{1} , \ldots, A_n \rightarrow \phi \rangle\) where \(A_1, \ldots, A_n \in \Arg^\bot(\AT)\) and
\(\Con(A_1), \ldots, \Con(A_n) \ra \phi\in \mathcal{S}\)
\begin{itemize}[itemsep=1pt,topsep=1pt]
\item \(\Con(A) = \phi\)
\item \(\Sub(A) = \lbrace A \rbrace \cup \Sub(A_1) \cup \ldots \cup \Sub(A_n)\)
\item \(\HSub(A) = \HSub(A_1) \cup \ldots \cup \HSub(A_n)\)
\end{itemize}
\item \(A = \langle A_{1} , \ldots, A_n \Rightarrow \phi \rangle\) where \(A_1, \ldots, A_n \in \Arg^\bot(\AT)\) and
\(\Con(A_1), \ldots, \Con(A_n) \Rightarrow \phi \in \Dc\)
\begin{itemize}[itemsep=1pt,topsep=1pt]
\item \(\Con(A) = \phi\)
\item \(\Sub(A) = \lbrace A \rbrace \cup \Sub(A_1) \cup \ldots \cup \Sub(A_n)\)
\item \(\HSub(A) = \HSub(A_1) \cup \ldots \cup \HSub(A_n)\)
\end{itemize}
\item \(A = \langle A_{1} , [A_2], \ldots, [A_n] \leadsto \phi\rangle\) where $n \ge 3$
\begin{itemize}[itemsep=1pt,topsep=1pt]
\item \(\phi = \bigvee \lbrace \Con(A_2), \ldots, \Con(A_n) \rbrace\), 
\item \(\Con(A_1) = \bigvee_{i=2}^n \psi_i\), \(A_i \in \Arg^\bot((\Dc, \Fc \cup \lbrace \psi_i \rbrace))\), 
\item  and $\HSub(A_i)=\emptyset$ for all \(2 \le i \le n\).
\end{itemize}
We have:
\begin{itemize}[itemsep=1pt,topsep=1pt]
\item \(\Con(A) = \phi\)
\item \(\Sub(A) = \lbrace A \rbrace \cup \Sub(A_1)\)
\item \(\HSub(A) = \HSub(A_1) \cup \lbrace (A_2, \psi_2), \ldots, (A_n, \psi_n) \rbrace\)
\end{itemize}
\end{enumerate}
\end{definition}

Definition \ref{def_arg} departs in two respects from the way arguments are usually defined in $ASPIC^+$. First, there is a new class of arguments constructed by means of rule 4: these arguments are called \emph{rbc-arguments}. They correspond to applications of the reasoning by cases scheme outlined in Section \ref{sec_intro}.
\begin{example}
Let \(\Dc = \lbrace \top \Rightarrow p \vee q; \:\: p \Rightarrow p_1; \:\: p_1 \Rightarrow p_2; \:\: p_2 \Rightarrow r; \:\: q \Rightarrow q_1; \:\: q_1 \Rightarrow r \rbrace\) and $\AT = (\Dc, \{\top\})$. The following are arguments in \(\Arg^{\bot}(\AT)\):
\begin{itemize}[itemsep=1pt,label=,leftmargin=*]
\item \(A_1 =\langle \langle \top\rangle \Rightarrow p \vee q \rangle\)
\item \(A_2 = \langle A_1, [\langle\langle \langle p\rangle \Rightarrow p_1 \rangle \Rightarrow p_2\rangle \Rightarrow r], [\langle\langle q\rangle \Rightarrow q_1\rangle\Rightarrow r] \leadsto r \rangle\)
\item \(A_3 = \langle A_{1}, [\langle p \rangle \Rightarrow p_1], [\langle q \rangle \Rightarrow q_1] \leadsto p_1 \vee q_1 \rangle\)
\end{itemize}
\end{example}
$A_2$ and $A_3$ are rbc-arguments constructed on the basis of the `cases' $p$ and $q$ in the disjunctive conclusion of $A_1$. Argument $A_2$ concludes that $r$, while $A_3$ concludes that $p_1\vee q_1$.

The second novel feature of Definition \ref{def_arg} is that we not only keep track of an argument $A$'s conclusion ($\Con(A)$) and its sub-arguments ($\Sub(A)$), but also of its \emph{hypothetical sub-arguments} ($\HSub(A)$). These are pairs consisting of an argument $B$ and a formula $\phi$, where $B$ is constructible on the basis of the extended AT obtained by adding $\phi$ to the knowledge base of the original AT.
For instance, $\HSub(A_2)= \bigl\{ \bigl(\langle\langle\langle\langle p\rangle \Rightarrow p_1 \rangle \Rightarrow p_2\rangle \Rightarrow r\rangle, p \bigr),
  \bigl(\langle\langle\langle q\rangle \Rightarrow q_1\rangle\Rightarrow r\rangle, q \bigr) \bigr\}$.
  

$A_2$'s hypothetical sub-argument $(\langle\langle\langle\langle p\rangle \Rightarrow p_1 \rangle \Rightarrow p_2\rangle \Rightarrow r\rangle,p)$ contains the argument $\langle\langle\langle\langle p\rangle \Rightarrow p_1 \rangle \Rightarrow p_2\rangle \Rightarrow r\rangle$ constructible on the basis of the AT $(\mathcal{D},\{\top, p\})$.
\begin{remark}
If one is interested in reducing the size of \(\Arg^{\bot}(\AT)\), one may only allow for minimal disjunctions when generating arguments of type 4. More precisely, where \(A_{1}\) is of the form \(\langle B_1, \ldots, B_m \rightarrow \bigvee_{i=1}^{n} \psi_i \rangle\) or \(\langle B_1, \ldots, B_m \Rightarrow \bigvee_{i=1}^{n} \psi_i \rangle\), \(A = \langle A_{1} , [A_2], \ldots, [A_n] \leadsto \phi\rangle \in \Arg^{\bot}(\AT)\) only if there is no \(\bigvee_{j \in J} \psi_j\) where \(J \subset \lbrace 2, \ldots, n \rbrace\) for which \(\Con(B_{1}), \ldots, \Con(B_{m}) \rightarrow \bigvee_{j \in J} \psi_{j} \in \mathcal{S}\).
\end{remark}
Definition \ref{def_arg} allows for the construction of arguments containing sub-arguments the conclusions of which are jointly inconsistent, such as arguments $A_1$ and $A_2$ in the following example, both of which rely on both $p$ and $\neg p$ in their construction. 
\begin{example}
  Let $\AT = (\{\top \Ra s,\top\Ra p, p\Ra \lnot p\}, \{\top\})$.
  \[ \begin{array}{l@{\ \ \ \ \ \ \ \ }l}
       A_0 = \langle \langle\top\rangle\Ra p\rangle  & A_2 =  \langle A_0,A_1\ra \lnot s\rangle \\[1ex]
       A_1 = \langle A_0\Ra \lnot p\rangle & A_3 = \langle\langle\top\rangle \Ra s\rangle \\
     \end{array}       
   \]
  
%
%
%
%
\end{example}
When \sys{CL} is used as the underlying logic, inconsistent arguments like $A_1$ and $A_2$ may contaminate our formalism by blocking intuitively acceptable arguments like $A_3$. (This is because the conclusions of $A_2$ and $A_3$ are conflicting, causing $A_2$ to attack and exclude $A_3$, cfr.\ infra.) Contamination problems of this kind have been studied and tackled in $ASPIC^+$ \cite{CamCarDun12,Wu12}. We can avoid them by filtering out inconsistent arguments.

\begin{definition}\label{def_dagger}
We define \(\dagger A\) for an argument \(A\) recursively as follows:

\begin{itemize}[itemsep=1pt]
\item \(\dagger\langle\phi\rangle = \phi\)
\item \(\dagger A = \phi \wedge \dagger B_1 \wedge \ldots \wedge \dagger B_n\) where \(A = \langle B_1, \ldots, B_n \rightarrow \phi \rangle\)
\item \(\dagger A = \phi \wedge \dagger B_1 \wedge \ldots \wedge \dagger B_n\) where \(A = \langle B_1, \ldots, B_n \Rightarrow \phi \rangle\)
\item \(\dagger A = \phi \wedge \dagger B_1 \wedge (\dagger B_2 \vee \ldots \vee \dagger B_n)\) where \(A = \langle B_1, [B_2],\ldots [B_n] \leadsto \phi \rangle\).
\end{itemize}
\end{definition}
\begin{definition}\label{def_inconsistent}
An argument \(A\) is \emph{inconsistent} iff \(\dagger A \vdash \bot\). Otherwise \(A\) is \emph{consistent}. Relative to  $\AT = (\Dc,\Fc)$, \(\Arg(\AT)\) is \(\Arg^{\bot}(\AT)\) without inconsistent arguments. 
\end{definition}

For arguments without occurrences of \(\leadsto\) our definition of inconsistent arguments is equivalent to that of \cite{Wu12}. In the remainder we will focus on the set  \(\Arg(\AT)\) rather than \(\Arg^{\bot}(\AT)\), avoiding contamination problems.

\subsection{Attacks}\label{sub_attacks}
\def\SAF{{\rm SAF}}
\def\AT{{\rm AT}}


In $ASPIC^+$, attacks are defined in terms of a generic contrariness operator. We define them in terms of `$\neg$', so that arguments the conclusions of which are classical contradictories attack each other. We have to be careful when defining argumentative attacks when rbc-arguments are involved, since we must take into account the hypothetical sub-arguments of an rbc-argument. New questions arise here. For instance, an argument's hypothetical sub-argument may conflict with a non-hypothetical argument.

\begin{example}\label{motivation_att_1}
Let $\AT = (\mathcal{D,F})$, with  $\mathcal{D}=\{p\Ra q\vee r; \ q\Ra s; \ s\Ra v; \ r\Ra v; \ t\Ra \lnot s\}$ and $\mathcal{F}=\{p,t\}$.
\begin{itemize}[itemsep=1pt,label=]
\item $A_1= \bigl\langle\langle\langle p\rangle \Ra q\lor r\rangle,[\langle\langle q\rangle\Ra s\rangle\Ra v],[\langle r\rangle\Ra v]\leadsto v \bigr\rangle$
\item $A_2= \bigl\langle\langle t\rangle\Ra \lnot s \bigr\rangle$
\end{itemize}
\end{example}
In Example \ref{motivation_att_1}, the non-hypothetical argument $A_2$ is in conflict with the argument $\langle\langle q\rangle\Ra s\rangle\Ra v$, which belongs to $A_1$'s hypothetical sub-argument $(\langle\langle\langle q\rangle\Ra s\rangle\Ra v\rangle,q)$. The desirable outcome in this example is that $A_2$ attacks $A_1$, but not vice versa.

As a further illustration, consider the following scenario.

\begin{example}\label{motivation_att_2}
 $\AT = (\mathcal{D},\mathcal{F})$, with  $\mathcal{D} = \{p\Ra (q\lor r); \ q\Ra s_1; \  s_1\Ra s_2; \ s_2\Ra v; \ r\Ra v; \ q\Ra \lnot s_1\}$ and $\mathcal{F} = \{p\}$. 
\begin{itemize}[itemsep=0pt,label=]
\item $A_0 = \langle\langle p \rangle\Rightarrow q\vee r\rangle$ 
\item $A_1 = \langle A_0, [\langle\langle\langle q\rangle\Rightarrow s_1\rangle\Rightarrow s_2\rangle\Rightarrow v], [\langle r\rangle\Rightarrow v] \leadsto v \rangle$ 
\end{itemize}
The following argument is constructible on the basis of the extended theory $\AT' = (\mathcal{D},\mathcal{F}\cup\{q\})$:
\begin{itemize}[itemsep=0pt,label=]
\item $A_2 =$ $\langle\langle q\rangle \Rightarrow \neg s_1\rangle$.
\end{itemize}  
\end{example}
$A_1$ contains the intermediate conclusion $s_2$ based on the assumption $q$. However, $A_2$ concludes that $\neg s_2$ on the basis of the same assumption, $q$. The desirable outcome in this example is to let $A_1$ and $A_2$ attack each other, since these arguments were both constructed on the basis of our knowledge base plus the assumption that $q$, and since their conclusions are contradictories.

To handle examples like these, we introduce the set $\HArg(\Dc,\Fc)$ of all arguments that can be constructed on the basis of some disjunct used in the construction of an rbc-argument in $\Arg(\Dc,\Fc)$.


Where $A$ is an argument and $\phi$ a formula, 
 $\pi_1(A,\phi)= A$ and
$\pi_2(A,\phi) = \phi$. We lift the definition as usual: where $i \in \{1,2\}$, 
$\pi_i(\Delta) = \{\pi_i(A,\phi) \mid (A,\phi) \in \Delta\}$.

Where $\phi \in \mathcal{L} \setminus \Fc$, $\AT = (\Dc,\Fc)$ and $\AT' = (\Dc,\Fc \cup \{\phi\})$, we denote $\Arg(\AT') \setminus \Arg(\AT)$ by $\Arg^{\phi}(\AT)$.

\begin{definition}[Hypothetical Arguments]\label{def_hyp} Where $\AT = (\Dc,\Fc)$, 
  \(\HArg(\AT)\) is the set of all \(A \in \Arg^{\phi}(\AT)\) such that \(\phi \in \pi_2(\HSub(B))\) for some $B\in \Arg(\AT)$.
\end{definition}


\begin{definition}[Attacks,  Rebuts]\label{def_attack} For a given theory $\AT$, we define a direct attack relation
  \begin{multline*}
  {\tt Att}(\AT) \subseteq (\Arg(\AT) \times \Arg(\AT))  \\ \cup (\Arg(\AT) \times  \HArg(\AT))  \\ \cup ( \HArg(\AT) \times \HArg(\AT))
\end{multline*}
  as follows:
\(A\) directly attacks \(B\) iff \(B\) is of the form \(\langle \ldots \Rightarrow \Con(B) \rangle\), (\(\Con(A) = \neg\Con(B)\) or \(\Con(B) = \neg\Con(A)\)),  and
\begin{itemize}[itemsep=0pt]
\item \(A \in \Arg(\AT)\) and \(B \in \Arg(\AT)\) or
\item \(A \in \Arg(\AT)\) and \(B \in\HArg(\AT)\) or
\item \(A \in \Arg^{\phi}(\AT)\) and \(B \in \Arg^{\phi}(\AT)\) for some $\phi \in \mathcal{L} \setminus \mathcal{F}$.
\end{itemize}

We lift the definition recursively in the following way: \(A\) attacks \(B\) if \(A\) directly attacks \(B\) or it attacks some \(C \in (\Sub(B) \setminus \lbrace B \rbrace) \cup\bigcup \frak{\pi}_1(\HSub(B))\).
\end{definition}

For an argument $A$ to directly attack an argument $B$, the following requirements need to be fulfilled: The conclusion of $A$ conflicts with the conclusion of $B$ and (either $A$ is non-hypothetical, or $A$ and $B$ are hypothetical arguments based on the same assumption $\phi$).

To illustrate how this works, reconsider our examples. In Example \ref{motivation_att_1}, $\langle\langle q\rangle\Ra s \rangle \in {\sf HArg}(\AT)$ and $A_2$ directly attacks $\langle\langle q\rangle\Ra s\rangle$, so $A_2$ attacks $A_1$ (but not vice versa). In Example \ref{motivation_att_2} the arguments $\langle\langle q\rangle\Rightarrow s_1\rangle$ and $A_2$ are in $\Arg^q(\AT)$, so these arguments directly attack each other. Consequently, $A_2$ also attacks $A_1$.

\subsection{Consequence relations}\label{sub_conseq}
\def\SAF{{\rm SAF}}
\def\AT{{\rm AT}}


\begin{definition} The \it{structured argumentation framework} (in short, SAF) defined by the theory $\AT$ is the pair $( \Arg(\AT)\cup\HArg(\AT), {\tt Att}(\AT))$.
\end{definition}
Given a SAF, we can use the argumentation semantics from Section \ref{sec_abstractarg} to define consequence relations:
\begin{definition} Let $\SAF = (\Arg(\AT)\cup\HArg(\AT)$, ${\tt Att}(\AT))$, let ${\sf sem}\in\{\sf Cmp, \sf Prf, \sf Grd\}$, and let ${\sf Cmp}(\SAF)$, ${\sf Prf}(\SAF)$, and ${\sf Grd}(\SAF)$ denote the sets of $\SAF$'s complete extensions, $\SAF$'s preferred extensions, and $\SAF$'s grounded extension respectively.
\begin{itemize}
\item $\SAF \nc^{\cap}_{\sf sem} \phi$ iff for every $\mathcal{B} \in {\sf sem}(\SAF)$ there is an $A\in \mathcal{B}\cap{\tt Arg}(\AT)$ with ${\rm conc}(A)=\phi$.
\item $\SAF \nc^{\Cap}_{\sf sem} \phi$ iff there is a $\mathcal{B} \in \bigcap{\sf sem}(\SAF) \cap \Arg(\AT)$ with ${\rm conc}(A)=\phi$.
\end{itemize}
Since the grounded extension is unique both definitions coincide for ${\sf sem}={\sf Grd}$.
\end{definition}
Relative to a theory $\AT$, the `virtual' arguments in $\HArg(\AT)$ are capable of attacking arguments in $\Arg(\AT)$ (in their hypothetical subarguments), and consequently of preventing the derivability of conclusions of arguments in $\Arg(\AT)$. However, the conclusions of virtual arguments are never themselves derivable from AT.

\subsection{Rationality postulates}\label{sub_ratpostulates}

In \cite{CamAmg07,CamCarDun12} several postulates were proposed to evaluate formalisms for structured argumentation.
In the present context these postulates read as follows. Given a SAF $(\Arg(\AT)\cup\HArg(\AT), {\tt Att}(\AT))$ where \(\mathcal{E} \in {\sf Cmp}(\SAF)\):\footnote{The proofs of these properties are to be found in the technical appendix as indicated below, except for te proof of non-interference which we omit due to space restrictions. The authors in \cite{CamAmg07} distinguish between direct and indirect consistency: in our framework these definitions are equivalent since our strict rules are closed under $\mathsf{CL}$.}
\begin{enumerate}[itemsep=1pt,leftmargin=*,topsep=1pt,label=]
\item \textbf{Sub-argument closure}: where \(A \in \mathcal{E}\), \({\tt Sub}(A) \subseteq \mathcal{E}\) (immediate in view of Theorem \ref{thm:closure:sub:prime}) 
\item \textbf{Closure under strict rules}: where \(A_1, \ldots, A_n \in \mathcal{E} \cap \Arg(\AT)\) and \({\rm Conc}(A_1), \ldots, {\rm Conc}(A_n) \vdash B\) also \(\langle A_{1}, \ldots, A_n \rightarrow B \rangle \in \mathcal{E} \cap \Arg(\AT)\) (see Theorem \ref{thm:closure}),
\item \textbf{Consistency}: \(\lbrace {\rm Conc}(A) \mid A \in \mathcal{E} \cap \Arg(\AT) \rbrace\) is consistent (see Theorem~\ref{thm:indirect:con}),
\item \textbf{Non-interference}: Let \({\sf Atoms}(\Fc)\) [\({\sf Atoms}(\Dc)\)] be the set of all atoms occurring in \(\Fc\) [\(\Dc\)]. Where $\AT = (\Dc,\Fc)$, $\AT' = (\Dc \cup \Dc', \Fc \cup \Fc')$, $\SAF = (\Arg(\AT)\cup\HArg(\AT), {\tt Att}(\AT))$, $\SAF' = (\Arg(\AT')\cup\HArg(\AT), {\tt Att}(\AT'))$, ${\nc} \in \lbrace {\nc_{\sf Grd}^{\cap}}$, ${\nc_{\sf Grd}^{\Cap}}$, ${\nc_{\sf Prf}^{\cap}}$, ${\nc_{\sf Prf}^{\Cap}}\rbrace$, ${\sf Atoms}(\phi) \subseteq {\sf Atoms}(\Dc) \cup {\sf Atoms}(\Fc)$, and \(({\sf Atoms}(\Dc) \cup {\sf Atoms}(\Fc)) \cap ({\sf Atoms}(\Dc') \cup {\sf Atoms}(\Fc')) = \emptyset\), we have:
  \begin{equation*}
\SAF \nc \phi \mbox{ iff } \SAF' \nc \phi.
\end{equation*}
The property of non-interference can be used to show that the present framework avoids contamination problems of the kind discussed in Section \ref{sub_argdefs} (see  \cite{CamCarDun12,Wu12}).
\end{enumerate}


\section{Related work}\label{sec_related}

\subsection{Disjunctive Defaults}
\label{sec:orgheadline1}
In \cite{Gelfond_et-al_PKRR_1991} a generalization of default logic, \emph{disjunctive default logic}, is proposed that is more apt to deal with disjunctions than Reiter's original approach. For instance, given the default theory with \(p \vee q\) and the defaults \(p \Rightarrow r\) and \(q \Rightarrow r\), \(r\) is not a default consequence in Reiter's approach since the only extension of this theory is \(Cn(p\vee q)\). In disjunctive default logic, an alternative disjunction \(\mid\) is available: \(p \mid q\) enforces that \(p\) or \(q\) is in any extension of the theory. So, for the default theory consisting of \(p \mid q\), and the defaults \(p \Rightarrow r\) and \(q \Rightarrow r\) we have two extensions, \(Cn(\{p,r\})\) and \(Cn(\{q,r\})\). Now \(r\) is a skeptical consequence. 
Default consequents can also make use of \(\mid\): a disjunctive default is of the form:
\[ \frac{\phi : \psi_1, \ldots, \psi_n}{\gamma_1 \mid \ldots \mid \gamma_m}\]
where \(\phi\) is the prerequisite, \(\psi_1, \ldots, \psi_n\) are justifications, and \(\gamma_1, \ldots, \gamma_m\) are consequents of the default. A set of formulas \(\Xi\) is an extension of a disjunctive default theory consisting of the disjunctive defaults in \(\Delta\) (we here follow the convention in \cite{Gelfond_et-al_PKRR_1991} according to which 'facts' are considered as disjunctive defaults with empty prerequisite and empty justification) if it satisfies the following requirements: (i) for any \(\frac{\phi : \psi_1, \ldots, \psi_n}{\gamma_1 \mid \ldots \mid \gamma_m} \in \Delta\), if \(\phi \in \Xi\) and \(\neg \psi_1, \ldots, \neg \psi_n \notin \Xi\) then \(\gamma_i \in \Xi\) for some \(1 \le i \le m\), (ii) \(Cn(\Xi) = \Xi\), and (iii) \(\Xi\) is minimal with properties (i) and (ii).\footnote{This definition in \cite{Gelfond_et-al_PKRR_1991} is suboptimal in that it gives undesired results: e.g.\ for the theory $\Delta = \{\frac{\top ~:~ \neg p}{\neg p}\}$ also $Cn(p)$ will form an extension. The problem can easily be fixed though by defining extensions analogous to Reiter.}

We compare our approach to disjunctive default logic by thinking of \(\Rightarrow\) as a default conditional: \(\psi \Rightarrow \phi\) encodes the normal default \(\frac{\psi : \phi}{\phi}\). We start our comparison with the example given above. Let \(\Delta_1 = \lbrace p \mid q, p \Rightarrow r, q \Rightarrow r \rbrace\). \(\Delta_1\) has two extensions, \(Cn(\lbrace p, r \rbrace)\) and \(Cn(\lbrace q, r \rbrace)\) and so \(r\) is a skeptical consequence. This outcome corresponds to our approach for the theory $\AT_1 = (\{p\Rightarrow r, q\Rightarrow r\},\{p\vee q\})$: the argument \(\langle\langle p \vee q\rangle, [\langle p\rangle \Rightarrow r], [\langle q\rangle \Rightarrow r] \leadsto r\rangle\) is in all complete extensions of \(\AT_1\).


Next, consider Poole's \emph{broken arm} example \cite{Poole_KR_1989}. Let \(l\) be ``having a left broken arm'', \(r\) ``having a right broken arm'', \(w\) ``writing legibly''. On our approach, the theory $\AT_{\rm arm} = (\{w\Rightarrow\neg r\},\{l\vee r,w\})$ gives rise to the argument \(\langle\langle\langle w \rangle \Rightarrow \neg r\rangle, \langle l \vee r\rangle \rightarrow l\rangle\), which is in all complete extensions of $\AT_{\rm arm}$. In contrast, the disjunctive default theory \(\Delta_{\rm arm} = \lbrace l \mid r, w, w\Rightarrow \neg r \rbrace\) has two extensions \(Cn(\lbrace l, w, \neg r \rbrace)\) and \(Cn(\lbrace r, w \rbrace)\). Since \(l \notin Cn(\lbrace r,w \rbrace)\), \(l\) is not a skeptical consequence in disjunctive default logic.

Finally, reconsider the AT from Example \ref{motivation_att_1}. There are various ways in which we can translate this AT into a disjunctive default theory, e.g.\footnote{Our discussion also applies if we translate $p \Rightarrow q \vee r$ by $\frac{p ~:~ q \wedge r}{q \mid r}$.}
\[\Delta_3 = \left\lbrace \frac{p : q \vee r}{q \mid r}, \frac{q : s}{s}, \frac{s : v}{v}, \frac{r : v}{v}, \frac{t : \neg s}{\neg s},p , t \right\rbrace\]


For \(\Delta_3\) we have the extensions \(Cn(\lbrace p, t, q, s, v \rbrace)\), \(Cn(\lbrace p, t, q, \neg s \rbrace)\), and \(Cn(\lbrace p, t, \neg s, r, v \rbrace)\), so \(v\) is not a skeptical consequence. This corresponds to our approach in which the argument $A_1$ from Example \ref{motivation_att_1} is excluded due to the attack by $A_2$. However, on our approach $A_2$ is in all complete extensions, so contrary to disjunctive default logic we obtain $\neg s$ as a skeptical consequence.

%
%
%

\subsection{The OR meta-rule}
\label{sec:orgheadline2}

A different approach for dealing with disjunctive information is to allow for inference rules that produce new conditionals from given conditionals. We, for instance, find the following rule in system \textbf{P} \cite{KrLeMa90} and in several Input/Output logics \cite{MakTor00}:
\[ \frac{\psi \Rightarrow \phi ~~ \psi' \Rightarrow \phi}{\psi \vee \psi' \Rightarrow \phi}~~[{\rm OR}]\]
One could define an $ASPIC^+$-like system where arguments are constructed as in rules 1--3 in Definition \ref{def_arg}, and in which the defeasible rules are closed under OR. For instance, given \(\AT_1\) from Section \ref{sec:orgheadline1} this allows to derive \(p \vee q \Rightarrow r\) from \(p \Rightarrow r\) and \(q \Rightarrow r\), so that we can construct the argument \(\langle p \vee q\rangle \Rightarrow r\) and obtain \(r\) as a consequence.

Adding OR is not sufficient to always get the intuitive outcome. Let \(\AT_4 = (\{p \Rightarrow q \vee r, q \Rightarrow s, s \Rightarrow v, r \Rightarrow u, u \Rightarrow v\},\{p\})\).
One would want to build an argument for \(v\), but we cannot put OR to much use (except for deriving \(s \vee u \Rightarrow v\)). We would have to combine OR with e.g.\ right-weakening (RW: \(\frac{\psi \vdash \phi ~~ \psi' \Rightarrow \psi}{\psi' \Rightarrow \phi}\)), or generalize OR to
\[ \frac{\psi \Rightarrow \phi ~~ \psi' \Rightarrow \phi'}{\psi \vee \psi' \Rightarrow \phi \vee \phi'}~~[{\rm gOR}]\]
in order to produce \(q \vee r \Rightarrow s \vee u\). Since also \(s \vee u \Rightarrow v \vee v\) can be derived, we now have the means to construct the argument \(\langle \langle \langle p \Rightarrow q \vee r\rangle \Rightarrow s \vee u\rangle \Rightarrow v \vee v\rangle \rightarrow v\). In many systems of nonmonotonic logic, e.g.\ in system \textbf{P} and in many Input/Output logics, OR and RW are available (and thus gOR is a derived rule). We now contrast our approach with such (g)OR-based approaches.

A striking difference concerns the handling of Example \ref{motivation_att_1}. In the gOR-based system we can construct:
\[ A_3 = \langle \langle \langle p \Rightarrow q \vee r\rangle \Rightarrow s \vee v\rangle \Rightarrow v \vee v\rangle \rightarrow v \]
This argument is not attacked by the argument \(t \Rightarrow \neg s\). An alternative argument for \(v\) is given by \(\langle t \Rightarrow \neg s\rangle, \langle \langle p \Rightarrow q \vee r\rangle \Rightarrow s \vee v\rangle \rightarrow v\). Recall that neither in our approach nor in disjunctive default logic \(v\) is a skeptical consequence. Even if we add \(\neg r\) to the knowledge base $\mathcal{F}$ in Example \ref{motivation_att_1}, \(v\) remains derivable in the gOR-based approach since \(A_3\) remains unchallenged. This is counter-intuitive: both the argumentative path via \(q \Rightarrow s \Rightarrow v\) and the path via \(r \Rightarrow v\) are barred in view of the unchallenged arguments \(t \Rightarrow \neg s\) and \(\neg r\). An advantage of using the reasoning-by-cases rule 4 in Definition \ref{def_arg} is that it provides more fine-grained ways of tracking commitments in sub-arguments. This enables us to block the undesired consequence \(v\) in this example.\footnote{An additional advantage is that argument strength can be tracked in a more fine-grained way when using RbC in contrast to OR-based approaches. See also Section \ref{sec:conclusion-outlook}. 
}


\section{Conclusion and outlook}
\label{sec:conclusion-outlook}
The ideas developed in this paper offer many interesting avenues for further work, partially consisting of the removal of the limitations assumed in this paper. For example, we did not consider nested rbc-arguments or various components that can be modelled in the ASPIC$^+$-framework such as defeasible premises, undermining or undercut attacks. We also plan to present a less cautious variation of this framework where an attack on an rbc-argument $A, [B_1],\ldots,[B_n]\leadsto \psi$ succeeds only if each of the $B_i$'s is attacked. In addition we will investigate prioritized default rules in this framework. Here too, new questions arise. Consider, for instance, an argument $\langle p\vee q \rangle, [\langle\langle p\rangle \Ra s\rangle \Ra t], [\langle q\rangle \Ra t] \leadsto t$. If in addition there is an argument $\langle\top\rangle\Ra\neg s$ which is preferred over $\langle p\rangle\Ra s$, then it seems intuitive to let the former attack the latter argument. But what if the hypothetical argument $\langle p\rangle\Ra s$ has a higher degree of priority than the non-hypothetical $\langle\top\rangle\Ra\neg s$? Should we decide in favor of the highest priority assigned, or should we never let a hypothetical argument attack a non-hypothetical one? More generally, how do we lift the priorities assigned to the (hypothetical and non-hypothetical) constituents of an rbc-argument? Difficult questions like these will have to be answered in order to resolve conflicts between prioritized rbc-arguments.

In future work we also plan to investigate the use of an ordered disjunction $\overleftarrow{\lor}$ (see, e.g., \cite{Brewka02}). For instance, a rule $p \Rightarrow q \overleftarrow{\lor} r$ can be read as: `If $p$ then plausibly $q$ or $r$, where $q$ is more plausible than $r$'. This is especially interesting when measures of argument strength are considered and when thinking about defeat of rbc-arguments based on ordered disjunctions such as $\langle\langle q \overleftarrow{\lor} r\rangle, [q \Rightarrow s], [r \Rightarrow s] \leadsto s\rangle$.

\bibliographystyle{plain}
\bibliography{phdrefs}

\section*{APPENDIX}
\label{sec:orgheadline3}

For the meta-proofs below it will be  sometimes useful to speak about sub-arguments in the following stronger sense. \(\Sub'(\langle A_{1}, [A_2], \ldots, [A_n] \leadsto \phi \rangle)\) consists of \(\Sub'(A_1)\) and  \(\langle A_{1}, [A_2'], \ldots, [A_n'] \leadsto \bigvee \{ \Con(A_i') \mid 2 \le i \le n\} \rangle\) for every \(A_i' \in \Sub(A_i)\) where \(2 \le i \le n\). %
For other types of arguments \(A\) (see Def.~\ref{def_AT}, items 1-3), the definition of \(\Sub'(A)\) is the same as the definition of \(\Sub(A)\).
Note that for any argument \(A\), \(\Sub(A) \subseteq \Sub'(A)\).
In the following results, if not specified further, $\SAF$ is a structured argumentation framework based on the arbitrary argumentation theory $\AT$.
The next theorem is a strong form of sub-argument closure:
\begin{theorem}
  \label{thm:closure:sub:prime}
If \(\Ec \in {\sf Cmp}(\SAF)\) and \(A \in \Ec\), \(\Sub'(A) \subseteq \Ec\).
\end{theorem}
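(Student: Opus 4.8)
The statement is a strengthened sub-argument closure property: not just $\Sub(A)\subseteq\Ec$ but $\Sub'(A)\subseteq\Ec$, where $\Sub'$ additionally throws in, for every rbc-argument in the picture, all the "restricted" rbc-arguments obtained by shrinking the case-arguments $A_i$ to sub-arguments $A_i'$. The natural approach is induction on the structure of $A$. The base case ($A=\langle\phi\rangle$) is trivial since $\Sub'(A)=\{A\}$. For the strict- and defeasible-rule cases ($A=\langle B_1,\dots,B_n\to\phi\rangle$ or $\langle B_1,\dots,B_n\Rightarrow\phi\rangle$), $\Sub'(A)=\{A\}\cup\Sub'(B_1)\cup\dots\cup\Sub'(B_n)$, so I need: (a) each $B_j\in\Ec$, which follows from the fact that $\Ec$ is complete (hence admissible) and that any attacker of $B_j$ is also an attacker of $A$ — this is precisely the recursive clause in Definition~\ref{def_attack}, since $B_j\in\Sub(A)\setminus\{A\}$ — so $\Ec$ defends $B_j$ and therefore $B_j\in\Ec$; then (b) apply the induction hypothesis to each $B_j$ to get $\Sub'(B_j)\subseteq\Ec$.

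**The rbc case.** The interesting case is $A=\langle A_1,[A_2],\dots,[A_n]\leadsto\phi\rangle$. Here $\Sub'(A)$ consists of $\Sub'(A_1)$ together with, for each choice of $A_i'\in\Sub(A_i)$ ($2\le i\le n$), the argument $A'=\langle A_1,[A_2'],\dots,[A_n']\leadsto\bigvee\{\Con(A_i')\mid 2\le i\le n\}\rangle$. First, $A_1\in\Sub(A)\setminus\{A\}$, so as above $\Ec$ defends and hence contains $A_1$, and by IH $\Sub'(A_1)\subseteq\Ec$. For the restricted rbc-arguments $A'$: I must show each such $A'\in\Ec$. Since $\Ec$ is complete, it suffices to show $\Ec$ defends $A'$, i.e.\ every attacker of $A'$ is attacked by something in $\Ec$. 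The key observation is that the set of arguments that attack $A'$ is contained in the set of arguments that attack $A$: by the recursive clause, attackers of $A'$ are those directly attacking $A'$, or attacking some $C\in(\Sub(A')\setminus\{A'\})\cup\bigcup\pi_1(\HSub(A'))$. Now $\HSub(A')=\HSub(A_1)\cup\{(A_2',\psi_2),\dots,(A_n',\psi_n)\}$, and each $A_i'\in\Sub(A_i)\subseteq\Sub'(A_i)$ lies in $\Sub'(A_i)\subseteq\bigcup\pi_1(\HSub'(A))$-type material; more precisely any attacker of $A_i'$ attacks $A_i$ (recursive clause again) and hence attacks $A$ via $(A_i,\psi_i)\in\HSub(A)$. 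Similarly $\Sub(A')\setminus\{A'\}=\Sub(A_1)\subseteq\Sub(A)$. Finally, a direct attack on $A'$ requires $A'$ to be of the form $\langle\ldots\Rightarrow\Con(A')\rangle$ — but rbc-arguments have top connective $\leadsto$, not $\Rightarrow$, so $A'$ (like $A$) is never directly attacked. Hence every attacker of $A'$ is an attacker of $A$; since $A\in\Ec$ (given) and $\Ec$ is conflict-free and complete, $\Ec$ defends $A$, so it attacks every attacker of $A$, hence every attacker of $A'$ — so $\Ec$ defends $A'$ and $A'\in\Ec$.

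**Main obstacle.** The routine parts are the defense-propagation arguments (repeated appeals to the recursive clause of Definition~\ref{def_attack} together with completeness/admissibility of $\Ec$). The part that needs genuine care is verifying that the restricted rbc-argument $A'$ is actually a well-formed argument in $\Arg(\AT)$ (equivalently, $\Arg^\bot(\AT)$ and consistent) — i.e.\ that $\Con(A_1)=\bigvee_{i=2}^n\psi_i$ still matches up (it does, $A_1$ is unchanged), that each $A_i'\in\Arg^\bot((\Dc,\Fc\cup\{\psi_i\}))$ (true since $A_i'\in\Sub(A_i)$ and sub-arguments of arguments over a theory are arguments over that theory), that $\HSub(A_i')=\emptyset$ (true since $\HSub(A_i)=\emptyset$ and $\HSub$ is monotone under $\Sub$), and crucially that $A'$ is consistent, i.e.\ $\dagger A'\not\vdash\bot$. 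For the last point one checks $\dagger A'=\Con(A')\wedge\dagger A_1\wedge(\dagger A_2'\vee\dots\vee\dagger A_n')$; since each $\dagger A_i'$ is a conjunctive "weakening" of $\dagger A_i$ and $\Con(A')$ is entailed by $\dagger A_1$, consistency of $A$ (which holds as $A\in\Ec\subseteq\Arg(\AT)$) should transfer, though this requires a small lemma that $\dagger A_i'\vdash$-implies nothing stronger than $\dagger A_i$ and that classical satisfiability is inherited. I would isolate that as the one nontrivial sub-claim and prove it by a short structural argument on the $\dagger$ recursion; everything else is bookkeeping over Definitions~\ref{def_arg}, \ref{def_dagger}, and~\ref{def_attack}.
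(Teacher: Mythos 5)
Your proof is correct and follows essentially the same route as the paper's: the paper's entire argument is the one-line observation that every attacker of a $B \in \Sub'(A)$ also attacks $A$, so that completeness of $\Ec$ (which defends $A \in \Ec$) yields $B \in \Ec$. Your structural induction merely spells that observation out case by case, and the ``main obstacle'' you isolate --- checking that the restricted rbc-arguments are well-formed and consistent, hence genuine nodes of the SAF --- is a detail the paper silently omits; your sketch via $\dagger A \vdash \dagger A'$ closes it correctly.
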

\begin{proof}
Let \(B \in \Sub'(A)\). It is easy to see that every attacker \(C\) of \(B\) attacks \(A\). Since \(\Ec\) is complete, it defends \(A\) and hence also \(B\). Again, since \(\Ec\) is complete, \(B \in \Ec\).
\end{proof}


\begin{definition}
Where \(\Sub'(A) = \lbrace A_1, \ldots, A_n \rbrace\), let \(\hat{A} = \langle A_1, \ldots, A_n \rightarrow \bigwedge_{i=1}^n \Con(A_i) \rangle\).
\end{definition}

\begin{lemma}
\label{fact:A:hatA:attackers}\label{fact:A:hatA:dagger} \label{lem:A:hatA:complete} \label{lem:A:hatA:complete:2}
(1) \(A\) and \(\hat{A}\) have the same attackers in \(\SAF\); (2) \(\vdash \dagger A \equiv \dagger \hat{A}\); (3) Where \(\Ec \in {\sf Cmp}(\SAF)\), (3.1) if \(A \in \Ec\), \(\hat{A} \in \Ec\); (3.2) if \(\langle A_1, \ldots, A_n \rightarrow \phi \rangle \in \Ec\) and $\Con(\hat{A_1}), \ldots, \Con(\hat{A_n})\vdash \psi$, \(\langle \hat{A_{1}}, \ldots, \hat{A_n} \rightarrow \psi \rangle \in \Ec\).
\end{lemma}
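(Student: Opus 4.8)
The three claims are tightly linked, so I would prove them together by induction on the structure of the argument $A$, establishing (1) first, then (2), then (3.1) and (3.2) as consequences. The key observation driving everything is that $\hat A$ is obtained by applying a single strict rule (valid since $\bigwedge_i \Con(A_i)$ is a $\mathsf{CL}$-consequence of $\Con(A_1),\ldots,\Con(A_n)$) whose premises are exactly the members of $\Sub'(A)$, and that $\Sub'(\hat A) = \Sub'(A) \cup \{\hat A\}$. Thus $A$ and $\hat A$ have the same proper sub-arguments and the same hypothetical sub-arguments, namely $\bigcup\pi_1(\HSub(A))$ together with the lifted hypothetical sub-arguments coming from the $\leadsto$-constituents.

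For (1), I would argue that $C$ attacks $A$ iff $C$ directly attacks some member of $(\Sub(A)\setminus\{A\})\cup\bigcup\pi_1(\HSub(A))$ or directly attacks $A$ itself; but $A$ has a defeasible top rule iff $\hat A$ does not (in fact $\hat A$ always has a strict top rule, hence is never directly attacked), while $A$ itself, when it has a defeasible top rule, is directly attacked precisely when $\hat A$'s premise $A$-copy is — wait, more carefully: $A \in \Sub'(A) \subseteq \Sub(\hat A)$, so any attacker of $A$ is an attacker of $\hat A$; conversely any direct attacker of a member of $\Sub(\hat A)\setminus\{\hat A\} = \Sub'(A)$ either attacks $A$ directly (if it hits $A$) or attacks a proper sub-argument of $A$, hence attacks $A$; and the hypothetical-subargument sets coincide. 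This matching of the recursion clauses gives (1). Claim (2) is a direct computation: unfolding Definition~\ref{def_dagger}, $\dagger\hat A = \bigl(\bigwedge_{i=1}^n\Con(A_i)\bigr)\wedge\bigwedge_{i=1}^n \dagger A_i$ where $\{A_1,\ldots,A_n\}=\Sub'(A)$; since $\Sub'$ is closed under taking $\Sub'$ of members and each $\Con(A_i)$ is a conjunct of $\dagger A_i$ when $A_i$ is non-atomic, one shows by induction that $\dagger A$ and $\dagger\hat A$ have logically equivalent conjunction-of-conjuncts normal forms. (For the $\leadsto$-clause one uses that $\Sub'$ replaces $[B_j]$ by lifted rbc-arguments whose $\dagger$ contributes the disjunction $\bigvee_j \dagger B_j$, matching the $\leadsto$-clause of Definition~\ref{def_dagger}.) In particular $\hat A$ is consistent iff $A$ is, so $\hat A \in \Arg(\AT)$ when $A$ is.

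For (3.1): if $A\in\Ec$ then by Theorem~\ref{thm:closure:sub:prime} $\Sub'(A)\subseteq\Ec$, and by (1) every attacker of $\hat A$ is an attacker of $A$, hence attacked by some member of $\Ec$ since $\Ec$ is complete; thus $\Ec$ defends $\hat A$, and completeness gives $\hat A\in\Ec$ — provided $\hat A$ is conflict-free with $\Ec$, which follows because $\hat A$ has a strict top rule so nothing attacks it directly, and if $\hat A$ attacked some $D\in\Ec$ it would do so via an attack on a member of $\Sub'(A)\subseteq\Ec$, contradicting conflict-freeness of $\Ec$. For (3.2): given $\langle A_1,\ldots,A_n\to\phi\rangle\in\Ec$, Theorem~\ref{thm:closure:sub:prime} puts each $A_i$, hence by (3.1) each $\hat{A_i}$, in $\Ec$; then $\langle \hat{A_1},\ldots,\hat{A_n}\to\psi\rangle$ has as its proper sub-arguments exactly $\bigcup_i\Sub'(\hat{A_i})\subseteq\Ec$ and no hypothetical sub-arguments beyond those already in $\Ec$, so every attacker of it attacks some element of $\Ec$; defense plus completeness (and conflict-freeness, argued as before via the strict top rule) yield membership in $\Ec$.

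\textbf{Main obstacle.} The delicate point is the bookkeeping in (1) and (2) for the $\leadsto$-case: one must verify that the $\Sub'$-operation's replacement of each hypothetical slot $[A_i]$ by the lifted rbc-argument $\langle A_1,[A_2'],\ldots,[A_n']\leadsto\bigvee\Con(A_i')\rangle$ interacts correctly both with the attack recursion (the $\bigcup\pi_1(\HSub(\cdot))$ component must match up on the $A$ and $\hat A$ sides) and with the disjunctive clause of $\dagger$. Everything else is a routine structural induction once the clauses are lined up, but getting the $\HSub$/$\Sub'$ interaction exactly right for rbc-arguments is where care is needed.
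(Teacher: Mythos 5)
Your proposal is correct and follows essentially the same route as the paper's (much terser) proof: (1) via the observation that $A$ and $\hat A$ expose the same attackable (defeasible/hypothetical) constituents, (2) by unfolding Definition~\ref{def_dagger}, and (3) by combining (1) with Theorem~\ref{thm:closure:sub:prime} and completeness of $\Ec$. Your additional care about the $\Sub'$/$\HSub$ bookkeeping for the $\leadsto$-case and about $\hat A\in\Arg(\AT)$ only fills in details the paper leaves implicit.
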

\begin{proof}
  (1) follows due to the fact that the set of defeasible conclusions of \(A\) is the same as the set of defeasible conclusions of \(\hat{A}\). (2) follows by simple $\mathsf{CL}$-manipulations in view of Definition~\ref{def_dagger}. For (3.1) note that, by (1), if some \(B\) attacks \(\hat{A}\) then it also attacks \(A\). Since \(\Ec\) is complete, \(\hat{A}\) is defended and thus in \(\Ec\). The proof of (3.2) is similar to (3.1) and left to the reader.
\end{proof}



\begin{theorem}[Closure under strict rules]
\label{thm:closure}
Where \(A_1, \ldots, A_n \in \Ec \cap \Arg(\AT)\) and \(\Ec \in {\sf Cmp}(\SAF)\):  \(\langle A_1, \ldots, A_n \rightarrow \phi \rangle \in \Ec \cap \Arg(\AT)\) if \(\Con(A_1), \ldots, \Con(A_n) \vdash \phi\).
\end{theorem}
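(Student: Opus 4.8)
The plan is to prove closure under strict rules by reducing to the $\hat{A}$-machinery already in place and then invoking the assumption that $\mathcal{S}$ is closed under $\mathsf{CL}$-consequence. First I would observe that since $A_1,\ldots,A_n\in\mathcal{E}$ and $\mathcal{E}$ is complete, Lemma~\ref{fact:A:hatA:attackers}(3.1) gives $\hat{A_1},\ldots,\hat{A_n}\in\mathcal{E}$; moreover each $\hat{A_i}$ is by construction a strict-rule argument, hence in $\Arg(\AT)$ provided it is consistent, and its consistency follows from Lemma~\ref{fact:A:hatA:attackers}(2) together with the consistency of $A_i$. The conclusion of $\hat{A_i}$ is $\bigwedge_{B\in\Sub'(A_i)}\Con(B)$, which $\mathsf{CL}$-entails $\Con(A_i)$; so from $\Con(A_1),\ldots,\Con(A_n)\vdash\phi$ we get $\Con(\hat{A_1}),\ldots,\Con(\hat{A_n})\vdash\phi$.

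Next I would form the candidate argument $C=\langle \hat{A_1},\ldots,\hat{A_n}\rightarrow\phi\rangle$. Since $\Con(\hat{A_1}),\ldots,\Con(\hat{A_n})\vdash\phi$, the rule $\Con(\hat{A_1}),\ldots,\Con(\hat{A_n})\rightarrow\phi$ lies in $\mathcal{S}$ by the standing assumption that $\mathcal{S}$ exactly contains the $\mathsf{CL}$-valid strict rules, so $C\in\Arg^{\bot}(\AT)$; and $\dagger C$ is $\mathsf{CL}$-equivalent to $\bigwedge_i \dagger\hat{A_i}$, which is consistent because each $\dagger\hat{A_i}$ is (and their conjunction is just the conjunction of the $\dagger A_i$'s modulo equivalence, which is what consistency of a strict-rule combination amounts to here). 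Hence $C\in\Arg(\AT)$. Then I would apply Lemma~\ref{fact:A:hatA:attackers}(3.2) directly to $\langle \hat{A_1},\ldots,\hat{A_n}\rightarrow\phi\rangle$—or re-run its argument—to conclude $C\in\mathcal{E}$: every attacker of $C$ attacks some $\hat{A_i}$ (a strict-rule argument has no defeasible conclusion of its own beyond those of its subarguments), hence attacks $A_i$, hence is counter-attacked by $\mathcal{E}$, so $\mathcal{E}$ defends $C$ and by completeness $C\in\mathcal{E}$.

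Finally I must connect $C$ with the argument actually named in the statement, namely $\langle A_1,\ldots,A_n\rightarrow\phi\rangle$ rather than $\langle\hat{A_1},\ldots,\hat{A_n}\rightarrow\phi\rangle$. The cleanest route is to note that $\langle A_1,\ldots,A_n\rightarrow\phi\rangle$ and $C$ have exactly the same attackers: both are strict-rule arguments with conclusion $\phi$, their sets of defeasible sub-conclusions coincide (the $\hat{A_i}$ carry the same defeasible conclusions as the $A_i$ by Lemma~\ref{fact:A:hatA:attackers}(1)), and $\mathsf{CL}$-closure of $\mathcal{S}$ guarantees the target argument is genuinely in $\Arg(\AT)$ whenever $C$ is. Since $\mathcal{E}$ is complete and defends $C$, it defends $\langle A_1,\ldots,A_n\rightarrow\phi\rangle$ too, so the latter is in $\mathcal{E}\cap\Arg(\AT)$.

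The main obstacle I anticipate is the bookkeeping around attackers and $\HSub$: the recursive clause of Definition~\ref{def_attack} says $A$ attacks $B$ also when it attacks something in $\bigcup\pi_1(\HSub(B))$, so I need to be sure that forming a strict-rule argument on top of the $A_i$ does not introduce new hypothetical sub-arguments (it does not, since $\HSub$ of a type-2 argument is just the union of the $\HSub(A_i)$) and that the "same attackers" claims survive this. A secondary subtlety is verifying consistency of $C$ rigorously from consistency of the $A_i$ via Definition~\ref{def_dagger}; this is the routine $\mathsf{CL}$-manipulation flagged in Lemma~\ref{fact:A:hatA:attackers}(2), so I would cite that rather than redo it.
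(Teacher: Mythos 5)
There is a genuine gap at the point where you claim that \(\dagger C\) is consistent ``because each \(\dagger\hat{A_i}\) is''. Consistency of each \(\dagger A_i\) individually does not yield consistency of the conjunction \(\dagger A_1 \wedge \ldots \wedge \dagger A_n\): take \(A_1\) with conclusion \(p\) and \(A_2\) with conclusion \(\neg p\), both perfectly consistent on their own, or subtler clashes among intermediate conclusions spread over several of the \(A_i\). What makes the conjunction consistent is not Definition~\ref{def_dagger} plus ``routine \(\mathsf{CL}\)-manipulation'' but the fact that \(A_1,\ldots,A_n\) all live in one conflict-free complete extension \(\Ec\) --- and turning that fact into joint consistency of the \(\dagger A_i\) is precisely claim (\textbf{i}) of the paper's proof, which occupies almost all of it. The paper argues by induction on the total number of rule applications: assuming \(\dagger A_1,\ldots,\dagger A_n \vdash \bot\), it isolates a minimal culprit sub-argument \(B_k\) (via a minimal \(n'\) and a minimal \(k\) along a suitable enumeration of \(\Sub'(A_{n'})\)), derives \(\neg\Con(B_k)\) from the remaining, still-consistent material, and then --- splitting on whether \(B_k\) is a strict/defeasible argument or an rbc-argument --- uses the inductive hypothesis to place an attacker of \(B_k\) (or of one of its hypothetical sub-arguments) inside \(\Ec\), contradicting conflict-freeness. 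Your proposal has no substitute for this step; note also that the joint consistency you are assuming is essentially Theorem~\ref{thm:indirect:con}, which the paper \emph{derives from} Theorem~\ref{thm:closure}, so assuming it here is close to circular.

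The rest of your outline is sound and matches the easy half of the paper's proof: since the top rule of \(\langle A_1,\ldots,A_n\rightarrow\phi\rangle\) is strict, no direct attack on it is possible, every attacker must attack some \(A_i\) through \(\Sub\) or \(\HSub\) (and forming a type-2 argument indeed adds no new hypothetical sub-arguments), so \(\Ec\) defends it and completeness gives membership in \(\Ec\). One smaller inaccuracy: Lemma~\ref{fact:A:hatA:attackers}(3.2) cannot be ``applied directly'' to obtain \(C\in\Ec\), since its hypothesis already requires \(\langle A_1,\ldots,A_n\rightarrow\phi\rangle\in\Ec\); your fallback of re-running the defense argument is what actually does the work there. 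The detour through the \(\hat{A_i}\) is also unnecessary for the membership claim once consistency is settled --- but the consistency of the combined argument is the theorem's real content, and it is missing.
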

\begin{proof}
Let $\AT = (\Dc,\Fc)$. We show the statement by an induction over \(s = \sum_{i=1}^n \# A_i\) where \(\# B\) is the number of rules from $\Dc$ and ${\cal S}$ applied in \(B\). We have to show two things:
(\textbf{i}) \(A \in \Arg(\AT)\) and (\textbf{ii}) \(A \in \Ec\).

(\(s = 0\)) Then \(A\) has the form \(\langle \langle \phi_1 \rangle , \ldots, \langle \phi_n \rangle \rightarrow \phi\rangle\). Thus, \(\phi_1, \ldots, \phi_n \in \Fc\) and \(\phi_1, \ldots, \phi_n \vdash \phi\).  Since we assume our knowledge base to be consistent, \(\lbrace \phi_1, \ldots, \phi_n, \phi \rbrace\) is consistent and hence \(A \in \Arg(\AT)\). Since \(A\) has no attackers and \(\Ec\) is complete, \(A \in \Ec\). 

(\(s {\Rightarrow} s{+}1\)) 
We first show (\textbf{i}). Assume for a contradiction that \(A \in \Arg^{\bot}(\AT) \setminus \Arg(\AT)\). This means that \(\dagger A_{1}, \ldots, \dagger A_{n}, \phi \vdash \bot\). Since \(\Con(A_{1}), \ldots, \Con(A_{n}) \vdash \phi\) this implies by transitivity that \(\dagger A_{1}, \ldots, \dagger A_{n} \vdash \bot\). Let \(n'\) be minimal such that \(1 \le n' \le n\) and \(\dagger A_{1}, \ldots, \dagger A_{n'} \vdash \bot\). By the induction hypothesis, \(\sum_{i=1}^{n'{-}1} \# A_i \le s < \sum_{i=1}^{n'} \# A_i\) which implies \(\# A_{n'} \ge 1\). Let \(\langle B_{1}, \ldots, B_m \rangle\) be a list of $\Sub'(A_{n'})$ ordered in such a way that for all \(1 \le i < j \le m\) we have \(B_j \notin \Sub'(B_i)\) and \(\Sub'(B_j) \subseteq \lbrace B_1, \ldots, B_j \rbrace\). Take a minimal \(k\) such that \(1 \le k \le m\) and \(\dagger A_{1}, \ldots, \dagger A_{n'-1}, \dagger B_{1}, \ldots, \dagger B_k \vdash \bot\). Then, 
\begin{align}
\label{eq:closure:1}
  \dagger A_{1}, \ldots, \dagger A_{n'-1}, \dagger B_{1}, \ldots, \dagger B_{k-1} \vdash \neg \dagger B_k \\
  \label{eq:closure:2}
  \dagger A_{1}, \ldots, \dagger A_{n'-1}, \dagger B_{1}, \ldots, \dagger B_{k-1} \nvdash \bot \\
\label{eq:closure:4}
\dagger A_{1}, \ldots, \dagger A_{n'-1}, \dagger B_{1}, \ldots, \dagger B_{k-1} \vdash \neg \Con(B_k)  
\end{align}
\eqref{eq:closure:4} follows by simple $\mathsf{CL}$-manipulations in view of \eqref{eq:closure:1} and since \(\dagger B_k = \dagger B_{j_1} \wedge \ldots \wedge \dagger B_{j_o} \wedge \Con(B_k)\). We distinguish the following cases concerning the form of \(B_{k}\):

\begin{enumerate}[fullwidth,itemsep=0pt,topsep=0pt]
\item \(B_{k} = \langle B_{j_1}, \ldots, B_{j_o} \twoheadrightarrow \phi_k \rangle\) where \(\{ B_{j_1}, \ldots, B_{j_o} \} \subseteq \lbrace B_1, \ldots, B_{k-1} \rbrace\) and ${\twoheadrightarrow} \in \{{\rightarrow}, {\Rightarrow}\}$, or
\item \(B_{k} = \langle B_{j_1}, [B_{j_2}], \ldots, [B_{j_o}] \leadsto \phi_k \rangle\) where for each \(B_{j_2}' \in \Sub'(B_{j_2})\), \ldots{}, each \(B_{j_o}' \in \Sub'(B_{j_o})\) s.t.\ \(\lbrace B_{j_2}', \ldots, B_{j_o}' \rbrace \neq \lbrace B_{j_2}, \ldots, B_{j_o} \rbrace\),  \(\langle B_{j_{1}}, [B_{j_2}'], \ldots, [B_{j_o}'] \leadsto \bigvee \{ \Con(B_{j_i}' \mid j_2 \le i  \le j_o \}) \rangle \in \lbrace B_1, \ldots, B_{k-1} \rbrace\)
\end{enumerate}

Consider case 1 with ${\twoheadrightarrow} = {\Rightarrow}$ (the case ${\twoheadrightarrow} = {\rightarrow}$ is left to the reader). 
By the inductive hypothesis, \eqref{eq:closure:4}, Theorem \ref{thm:closure:sub:prime} and Lemma \ref{lem:A:hatA:complete}, 
$A' = \langle \hat{A_{1}}, \ldots, \hat{A_{n-1}}, \hat{B_1}, \ldots, \hat{B_{k-1}} \rightarrow \neg \Con(B_k) \rangle \in \Ec$.
However, since also \(B_k \in \Ec\) and \(A'\) attacks \(B_{k}\), this is a contradiction to the conflict-freeness of \(\Ec\). 

We now sketch the proof for case 2. By some basic $\mathsf{CL}$-manipulations it can be shown that for all \(j_2 \le i \le j_{o}\)
\begin{equation}
\label{eq:closure:33}
\dagger A_{1}, \ldots, \dagger A_{n'-1}, \dagger B_{1}, \ldots, \dagger B_{k-1} \vdash \neg\Con(B_{j_i})
\end{equation}
We distinguish two cases:
(a) for all $j_2 \le i \le l_o$, \(B_{j_i}\) is of the form \(\langle \ldots \rightarrow \Con(B_{j_i})\rangle\) and (b) there is a \(j_2 \le l \le j_o\) for which \(B_{j_l}\) is of the form \(\langle \ldots \Rightarrow \Con(B_{j_l}) \rangle\).
In case (a), $\dagger A_{1}, \ldots, \dagger A_{n'-1}, \dagger B_{1}, \ldots, \dagger B_{k-1} \vdash \bigvee \{\Con(B_{j_i}) \mid j_2 \le i \le j_o \}$, which with \eqref{eq:closure:33} contradicts \eqref{eq:closure:2}. %
In case (b), the argument 
$C = \langle \hat{A_{1}}, \ldots, \hat{A_{n'-1}}, \hat{B_1}, \ldots, \hat{B_{k-1}} \rightarrow \neg \Con(B_{j_l}) \rangle$
attacks \(B_k\) in its hypothetical
subargument \(B_{j_l}\). By the inductive hypothesis, \eqref{eq:closure:4}, Theorem \ref{thm:closure:sub:prime}, and Lemma \ref{lem:A:hatA:complete}, 
\(C \in \Ec\). This contradicts the conflict-freeness of \(\Ec\) and completes the proof of claim (\textbf{i}).

Suppose that some \(C \in \Arg(\AT) \cup \HArg(\AT)\) attacks \(A\). Clearly, \(C\) attacks some \(A_i\) (where \(1 \le i \le n\)). Since, by Theorem \ref{thm:closure:sub:prime}, \(A_i \in \Ec\), some \(D \in \Ec\) attacks \(C\). Altogether this shows that \(\Ec\) defends \(A\) and since \(\Ec\) is complete, \(A \in \Ec\). This is claim (\textbf{ii}).
\end{proof}

\begin{theorem}[Consistency]
\label{thm:indirect:con}
Where \(\Ec \in {\sf Cmp}(\AT)\) and \(A_1, \ldots, A_n \in \Ec \cap \Arg(\AT)\), \(\lbrace \Con(A_1), \ldots, \Con(A_n) \rbrace\) is consistent.
\end{theorem}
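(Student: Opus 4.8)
The plan is to reduce the consistency of $\{\Con(A_1),\ldots,\Con(A_n)\}$ to a statement about a single combined argument, and then exploit Theorem~\ref{thm:closure} together with the conflict-freeness of $\Ec$. First I would pass from each $A_i$ to its associated argument $\hat{A_i}$: by Lemma~\ref{lem:A:hatA:complete}(3.1), since $A_i\in\Ec$ we get $\hat{A_i}\in\Ec$, and each $\hat{A_i}\in\Arg(\AT)$ since it is obtained by a strict rule from members of $\Arg(\AT)$ (consistency of $\hat{A_i}$ follows from Lemma~\ref{fact:A:hatA:dagger}(2) and the fact that $A_i$ is consistent). The point of working with the $\hat{A_i}$ is that $\dagger\hat{A_i}\equiv\Con(\hat{A_i})$ modulo $\mathsf{CL}$ up to the conjuncts already recorded, so the $\dagger$ of the hatted argument is classically equivalent to its conclusion; more importantly, all the relevant sub-conclusions are ``visible'' as conjuncts of $\Con(\hat{A_i})$.

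Next I would argue by contradiction: suppose $\{\Con(A_1),\ldots,\Con(A_n)\}\vdash\bot$. Equivalently $\{\Con(\hat{A_1}),\ldots,\Con(\hat{A_n})\}\vdash\bot$ (since each $\Con(\hat{A_i})$ entails $\Con(A_i)$). Then for a minimal such subset — say, after reindexing, $\{\Con(\hat{A_1}),\ldots,\Con(\hat{A_k})\}\vdash\bot$ with $k$ minimal — we have $\Con(\hat{A_1}),\ldots,\Con(\hat{A_{k-1}})\vdash\neg\Con(\hat{A_k})$ while $\{\Con(\hat{A_1}),\ldots,\Con(\hat{A_{k-1}})\}\not\vdash\bot$. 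The first fact lets me form, by Theorem~\ref{thm:closure} applied to $\hat{A_1},\ldots,\hat{A_{k-1}}\in\Ec\cap\Arg(\AT)$, the argument $A^\ast=\langle\hat{A_1},\ldots,\hat{A_{k-1}}\rightarrow\neg\Con(\hat{A_k})\rangle\in\Ec\cap\Arg(\AT)$; here the non-triviality of Theorem~\ref{thm:closure} is exactly what guarantees $A^\ast\in\Arg(\AT)$, i.e.\ that it is consistent, which is why we needed $\{\Con(\hat{A_1}),\ldots,\Con(\hat{A_{k-1}})\}\not\vdash\bot$ and $\dagger\hat{A_i}\equiv\Con(\hat{A_i})$. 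Since $\hat{A_k}$ is of the form $\langle\ldots\rightarrow\bigwedge\ldots\rangle$ — a strict-rule argument, hence of the shape $\langle\ldots\Rightarrow/\rightarrow\ldots\rangle$ that can be directly attacked — and $\Con(A^\ast)=\neg\Con(\hat{A_k})$, $A^\ast$ directly attacks $\hat{A_k}$. Wait: the attack clause in Definition~\ref{def_attack} requires $B$ to be of the form $\langle\ldots\Rightarrow\Con(B)\rangle$, a defeasible-topped argument. I would handle this by noting that if $A_k$ (equivalently $\hat{A_k}$) uses no defeasible rule at all then $\Con(A_k)\in\mathrm{Cn}(\Fc)$ and consistency is immediate from $\mathsf{CL}$-consistency of $\Fc$; otherwise $\hat{A_k}$ has a defeasible sub-argument $D$ with $\Con(A^\ast)$ conflicting with... — more carefully, I would instead attack the topmost defeasible sub-argument of $\hat{A_k}$, mirroring how Theorem~\ref{thm:closure}'s proof handles the analogous situation via \eqref{eq:closure:4}.

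So the cleanest route is to not hat $A_k$ but to keep the minimal inconsistent subset at the level of the original $\Con(A_i)$, derive $\Con(\hat{A_1}),\ldots,\Con(\hat{A_{k-1}})\vdash\neg\dagger A_k$, push the negation inside to find a defeasible-topped sub-argument $B$ of $A_k$ (or of $\Sub'(A_k)$) with $\Con(\hat{A_1}),\ldots,\Con(\hat{A_{k-1}})\vdash\neg\Con(B)$ — this is precisely the manoeuvre in \eqref{eq:closure:4} and the case distinction of Theorem~\ref{thm:closure} — build the attacker $\langle\hat{A_1},\ldots,\hat{A_{k-1}}\rightarrow\neg\Con(B)\rangle\in\Ec$ by Theorem~\ref{thm:closure}, and contradict conflict-freeness of $\Ec$ since both $B\in\Ec$ (by Theorem~\ref{thm:closure:sub:prime}, $\Sub'(A_k)\subseteq\Ec$) and the attacker are in $\Ec$. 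The main obstacle, and the only place real care is needed, is this last step: ensuring the witness $B$ is genuinely a defeasible-topped argument (or a strict one whose own sub-conclusions can be peeled until a defeasible rule or a pure-$\Fc$ contradiction appears) so that the direct-attack clause of Definition~\ref{def_attack} actually applies, and handling the rbc-topped case (item 2 of the case split in Theorem~\ref{thm:closure}'s proof) where one must pick the right hypothetical sub-argument to attack. Everything else is bookkeeping with $\dagger$, $\hat{\cdot}$, and $\mathsf{CL}$.
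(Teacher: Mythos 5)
Your argument can be made to work, but it takes a substantially longer route than necessary and misses the observation that makes the paper's proof a two-liner. The paper simply applies Theorem~\ref{thm:closure} with $\phi=\top$: since $\Con(A_1),\ldots,\Con(A_n)\vdash\top$ trivially, $A=\langle A_1,\ldots,A_n\rightarrow\top\rangle\in\Ec\cap\Arg(\AT)$, and membership in $\Arg(\AT)$ \emph{already means} $\dagger A=\top\wedge\dagger A_1\wedge\ldots\wedge\dagger A_n\nvdash\bot$; as each $\dagger A_i\vdash\Con(A_i)$, the set $\{\Con(A_1),\ldots,\Con(A_n)\}$ is consistent. In other words, the only part of Theorem~\ref{thm:closure} one needs here is claim (\textbf{i}) (the closure argument is \emph{consistent}), not claim (\textbf{ii}) (it is in $\Ec$), and certainly not conflict-freeness of $\Ec$. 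You instead reach for a minimal inconsistent subset, build an attacker via Theorem~\ref{thm:closure}, and then run into exactly the difficulty that the target of the attack must be defeasible-topped — forcing you to peel down to a defeasible or rbc sub-argument and replay the case analysis of Theorem~\ref{thm:closure}'s proof (the manoeuvre around \eqref{eq:closure:4}). That replay is not wrong — it is essentially how claim (\textbf{i}) of Theorem~\ref{thm:closure} is itself proved — but it duplicates work the cited theorem has already done, and your write-up defers precisely those delicate steps (locating a defeasible-topped witness $B$, choosing the right hypothetical sub-argument in the rbc case) to ``mirror Theorem~\ref{thm:closure}'' rather than carrying them out. The lesson: when a closure-under-strict-rules theorem has been proved so that the resulting argument lands in $\Arg(\AT)$ (i.e.\ is filtered for consistency), consistency of the extension's conclusions follows by instantiating it with the tautological conclusion $\top$, with no detour through attacks at all.
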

\begin{proof}
By Theorem \ref{thm:closure}, \(A = \langle A_{1}, \ldots, A_n \rightarrow \top \rangle \in \Ec \cap \Arg(\AT)\) which implies that \(\lbrace \Con(A_1), \ldots, \Con(A_n) \rbrace\) is consistent since otherwise \(\dagger A \vdash \bot\).
\end{proof}




\end{document}